\newtheorem{theorem}{Theorem}
\newtheorem*{theorem*}{Theorem}
\title{VBMO: Voting-Based Multi-Objective Path Planning}
\author{
    Raj Korpan
    \affiliations
    Iona University
    \emails
    rkorpan@iona.edu
}
\begin{document}

\maketitle

\begin{abstract}
This paper presents VBMO, the Voting-Based Multi-Objective path planning algorithm, that generates optimal single-objective plans, evaluates each of them with respect to the other objectives, and selects one with a voting mechanism. VBMO does not use hand-tuned weights, consider the multiple objectives at every step of search, or use an evolutionary algorithm. Instead, it considers how a plan that is optimal in one objective may perform well with respect to others. VBMO incorporates three voting mechanisms: range, Borda, and combined approval. Extensive evaluation in diverse and complex environments demonstrates the algorithm's ability to efficiently produce plans that satisfy multiple objectives.
\end{abstract}

\section{Introduction}
Autonomous agents must often simultaneously address multiple competing objectives, such as distance, time, and safety, when path planning. Faced with multiple objectives, previous search-based approaches have either compromised among those objectives at each step in the search process or relied on a mathematical combination of them hand-tuned for a particular environment. The thesis of this work is that voting provides a more efficient way to find consensus among multiple objectives. This paper describes \textit{VBMO}, a voting-based multi-objective path planning approach, where a set of single-objective planners each constructs its own optimal plan with A* and then uses a voting mechanism to select the plan that performs best across all objectives. The algorithm is evaluated in many challenging environments.

An optimal graph-search algorithm finds the least cost path from a start vertex to a target vertex. Typically, the algorithm exploits a weighted graph that represents a real-world problem, such as a navigable two-dimensional space. Such a graph $G = (V,E)$ represents unobstructed locations there as vertices $V$. Edges $E$ in $G$ each connect two vertices, normally only if one can move directly between them. Each edge is associated with a label for the cost to traverse it. For example, if the objective were to minimize path length, edge labels could record the Euclidean distance between pairs of vertices. Without loss of generality, optimization here is assumed to be search for a minimum cost.

A single-objective path planner $SO$ seeks a plan $P$ in $G$ that minimizes a single objective $\beta$, such as distance. A plan $P$ is \textit{optimal} with respect to $\beta$ only if no other plan $P'$ has a lower total cost $\beta(P)$ for that objective, that is, for every other plan $P', \beta(P) \leq \beta(P')$. 

A \textit{multi-objective} path planner $MO$ seeks a plan $P$ that performs well with respect to a set $B$ of objectives. It can guide search in that graph with hand-tuned weights that encapsulate its objectives' costs into a single value or it construct a new compromise among its objectives at every plan step. 
If, for example, $B = \{\beta_1, \beta_2\}$, where $\beta_1$ is travel distance and $\beta_2$ is proximity to obstacles, $MO$ would seek a plan $P$ that scores well on both objectives. Because objectives may conflict, no single plan is likely to be optimal with respect to all of $B$. Typically, a potential plan will perform better with respect to some $\beta$'s and worse with respect to others. 

Let $B=\{\beta_1, \beta_2,\ldots,\beta_J\}$ be a set of $J$ planning objectives with respective costs for plan $P$ as $\{\beta_1(P),\ldots,\beta_J(P)\}$ calculated in a graph labeled with their individual objectives. A plan $P_1$ \textit{dominates} another plan $P_2$ ($P_1 \ll P_2$) when $\beta(P_1) \leq \beta(P_2)$ for every $\beta \in B$ and $\beta_j(P_1) < \beta_j(P_2)$ for at least one objective $\beta_j \in B$. Dominance is transitive, that is, if $P_1 \ll P_2$ and $P_2 \ll P_3$, then $P_1 \ll P_3$ \cite{pardalos2008pareto}. Among all possible plans, a non-dominated plan lies on the \textit{Pareto frontier}, the set of all solutions that cannot be improved on one objective without a penalty to another objective \cite{lavalle2006planning}. A typical multi-objective planner searches for plans that lie on the Pareto frontier and then an external decision maker chooses among them. A plan is \textit{Pareto optimal} if and only if no other plan dominates it.

We present an algorithm for multi-objective path planning, called \textit{VBMO}, that produces a set of plans efficiently and then uses a voting mechanism to select a plan that is optimal for at least one objective and is guaranteed to be on the Pareto frontier. Unlike other approaches, VBMO does not need to modify the operations of the search algorithm. Instead, it relies on the efficiency of single-objective search and uses post-hoc evaluation and voting to select a non-dominated plan.

The next sections provide related work in multi-objective path planning and describe VBMO. The final sections present empirical results and discuss future work.

\begin{figure*}[t]
\centering
\subfigure[Map lak110d from Dragon Age: Origins]{
\includegraphics[width = 0.3\linewidth]{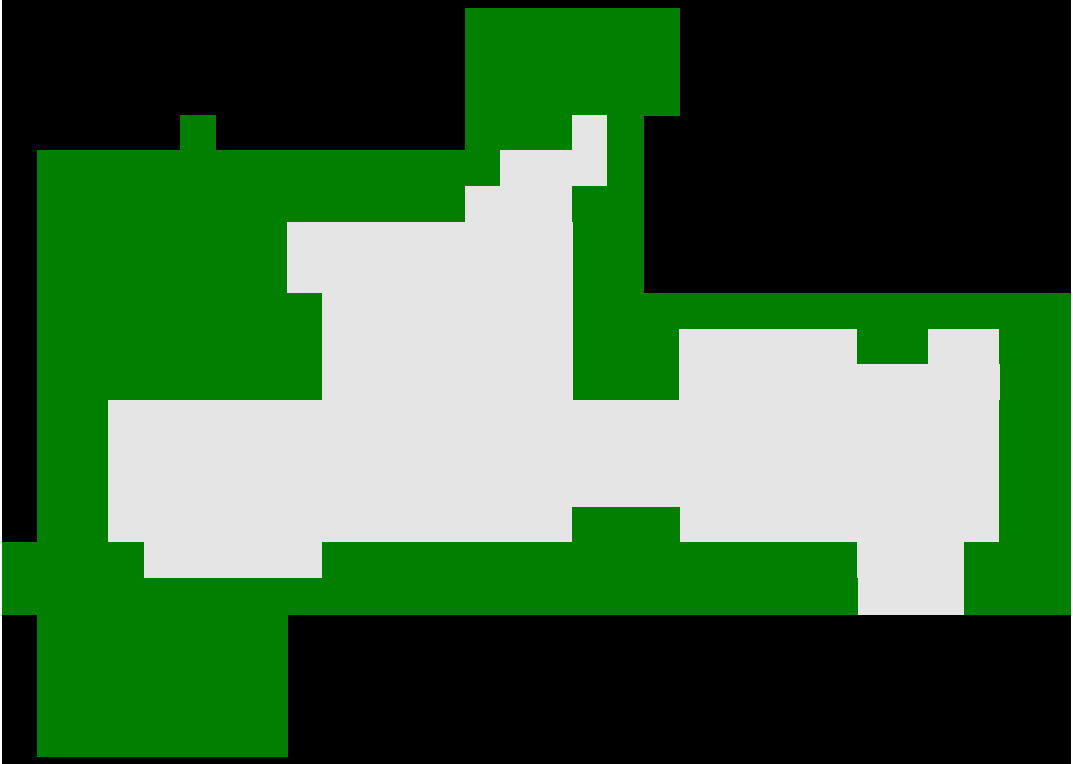}}
\subfigure[Graph of environment]{
\includegraphics[width = 0.49\linewidth]{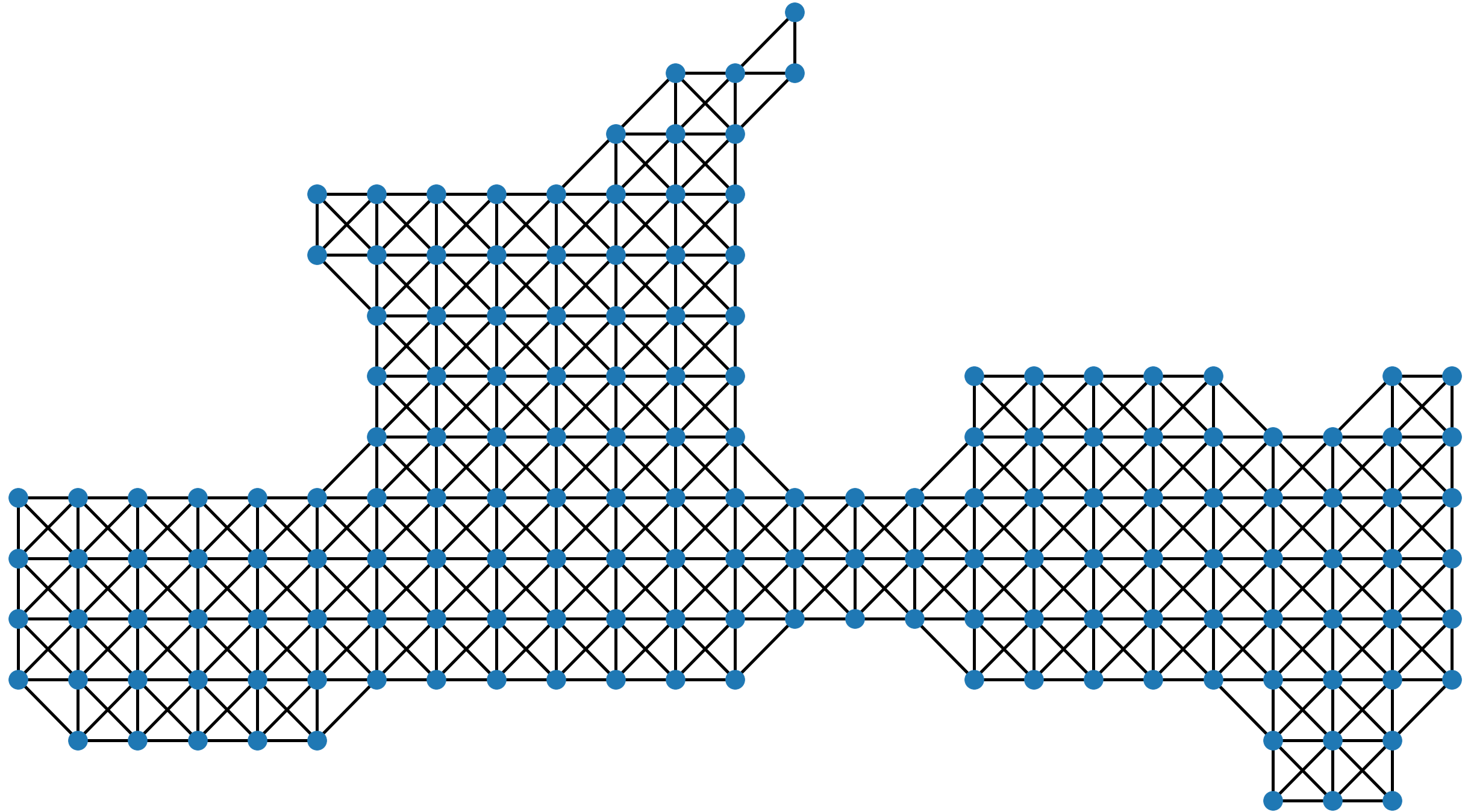}}
\caption{Each environment is converted to a graph whose topology is shared across all objectives.}
\label{fig:environment}
\end{figure*}

\begin{figure*}[t]
\centering
\subfigure[Distance]{
\includegraphics[width = 0.33\linewidth]{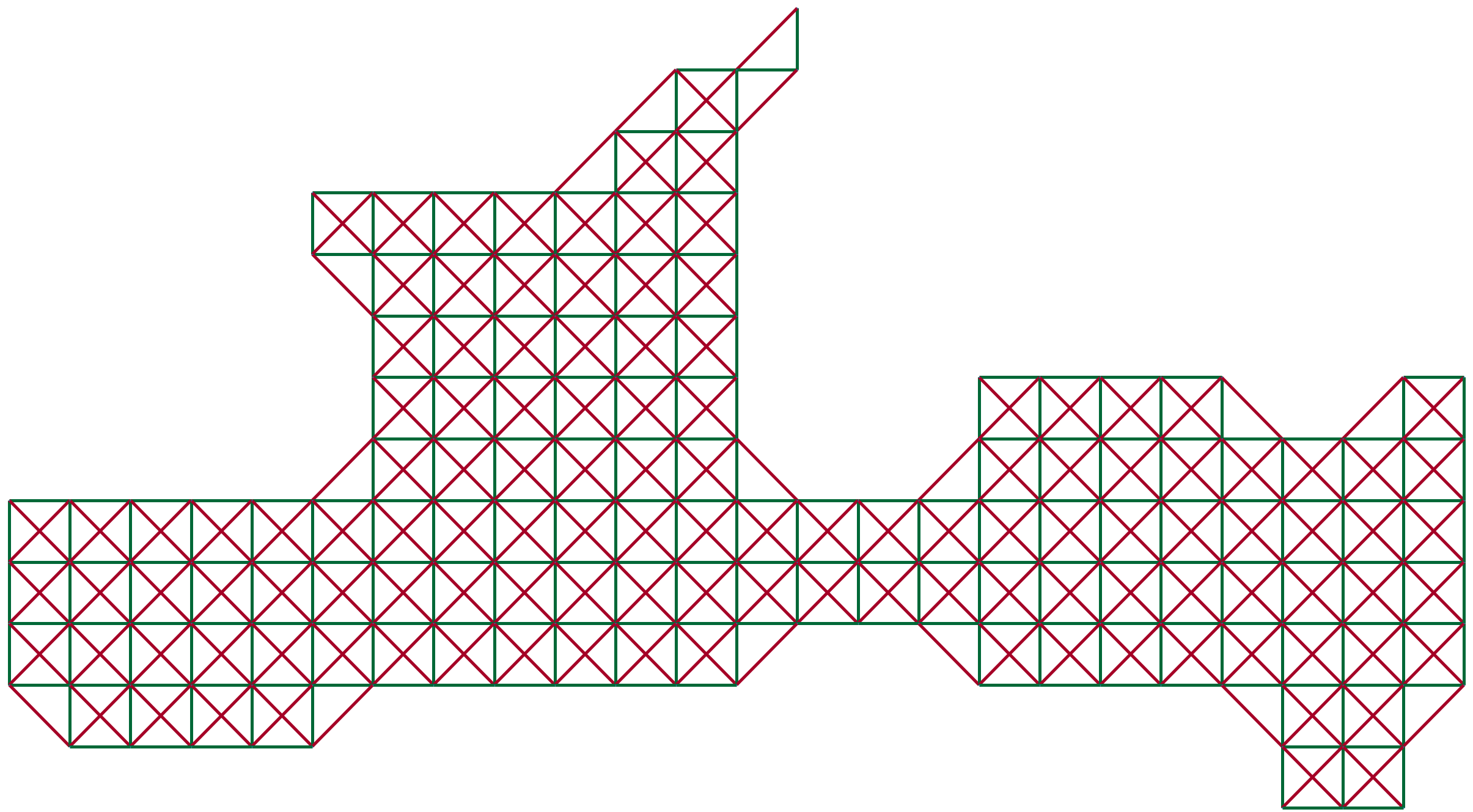}}
\subfigure[Safety]{
\includegraphics[width = 0.33\linewidth]{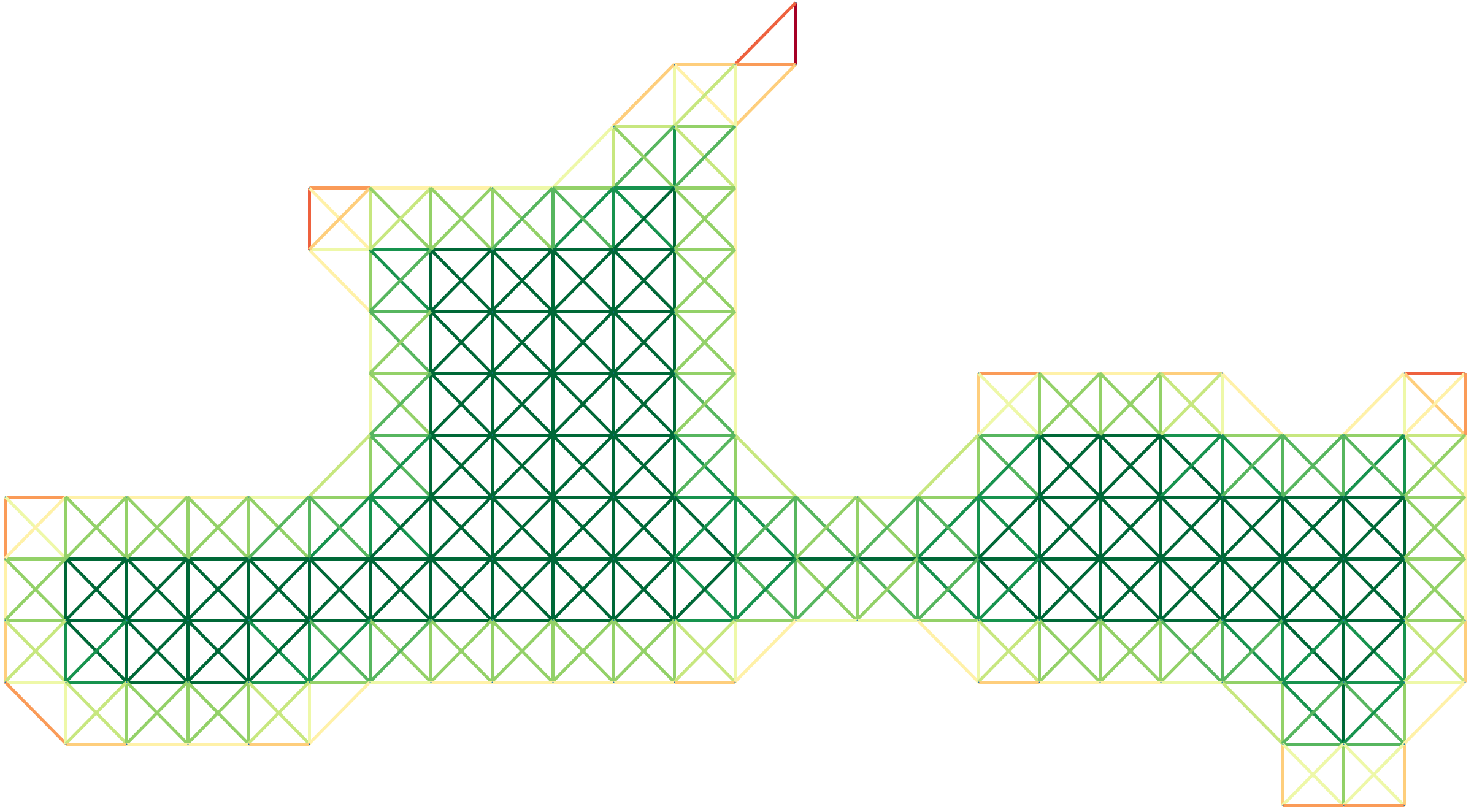}}
\subfigure[Random]{
\includegraphics[width = 0.33\linewidth]{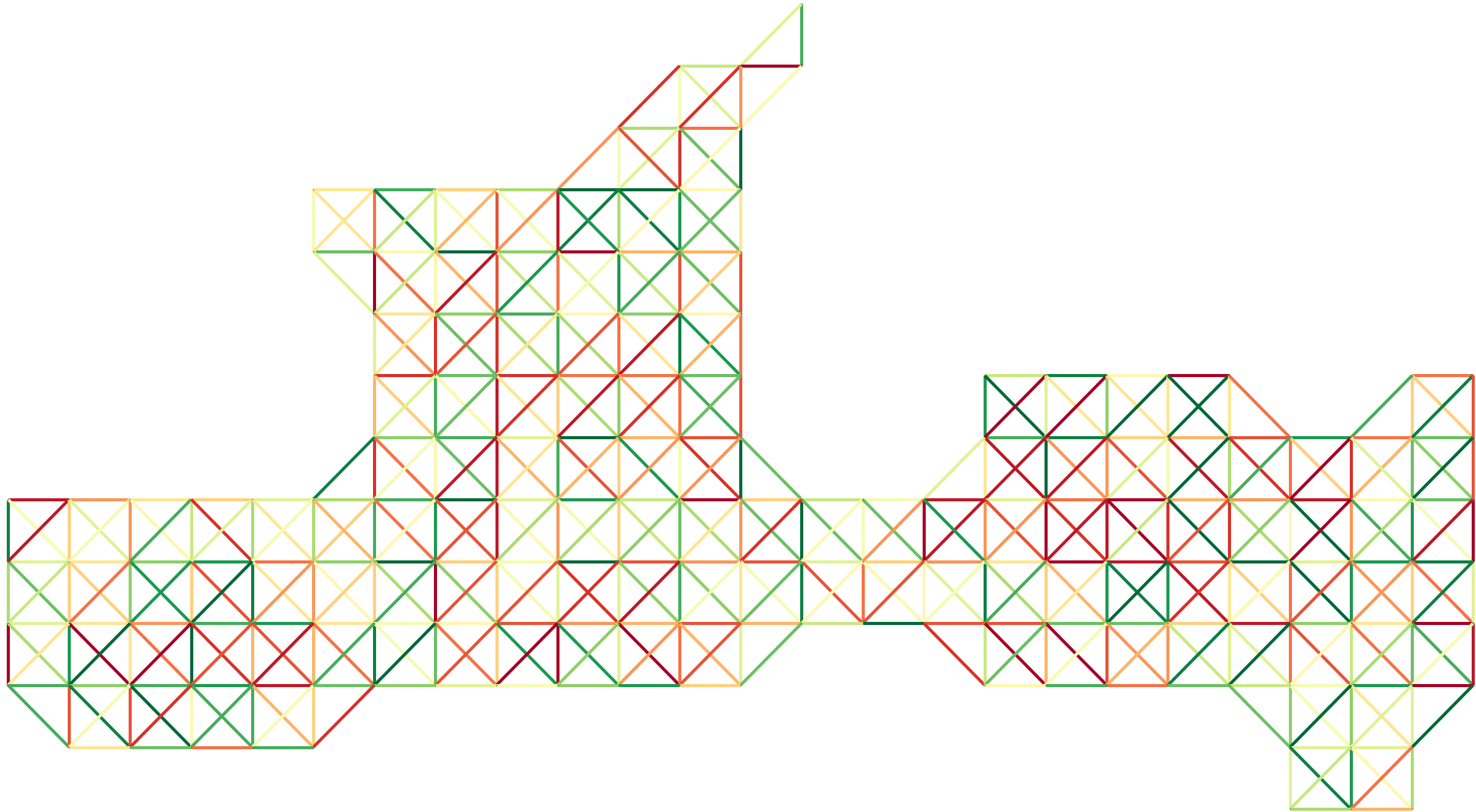}}
\caption{Each objective labels edges in the graph differently. Edges are colored in a range from green for low cost to red for high cost. (a) Distance is measured as 1 for vertical and horizontal edges and 1.414 for diagonals. (b) Safety is measured as the maximum degree in the graph (8 in this example) plus 1 minus the average degree of the two vertices joined by the edge. (c) Random costs are generated uniformly from 1 to 20.}
\label{fig:costs}
\end{figure*}

\section{Related Work}
An early approach to multi-objective optimization treated it as a single-objective problem for a simple weighted sum of the objectives \cite{zadeh1963optimality}. Others addressed individual objectives in a weighted sum with constraints \cite{haimes1973integrated}, minimum values \cite{lee1972goal}, or ideal values \cite{wierzbicki1980use}. The weighted sum approach has also been applied to the heuristic function of an optimal search algorithm \cite{refanidis2003multiobjective}. All this work, however, required a human expert with knowledge of the relative importance of the objectives to tune the weights \cite{marler2010weighted}. Moreover, small changes in those weights can result in dramatically different plans.

Many have used metaheuristics (e.g., evolutionary algorithms) to find non-dominated solutions to multi-objective problems \cite{deb2002fast}. Those approaches, however, do not guarantee optimality, require tuning many hyperparameters, and are computationally expensive \cite{talbi2012multi}. Furthermore, as the number of objectives increases, the fraction of non-dominated solutions among all solutions approaches one \cite{farina2002optimal} and the size of the Pareto frontier increases exponentially \cite{jaimes2015many}. As a result, methods that seek Pareto dominance break down with more objectives because it becomes more computationally expensive to compare all the potential non-dominated solutions. VBMO avoids this computation on infinitely many points on the surface of the Pareto frontier. Instead, it only ever compares $|B|$ solutions because it transforms the multi-objective problem into a set of single-objective problems.

A*, the traditional optimal search algorithm, requires an admissible heuristic, one that consistently underestimates its objective \cite{hart1968formal}. Several approaches extend A* to address multi-objective search. Multi-objective A* tracks all the objectives simultaneously as it maintains a queue of search nodes to expand \cite{stewart1991multiobjective}. NAMOA* extends multi-objective A* with a queue of partial solution paths instead of search nodes, but it is slow, memory hungry, and does not scale well \cite{mandow2008multiobjective}. Multi-heuristic A* modifies A* to consider multiple heuristics, some of which can be inadmissible \cite{aine2016multi}. It interleaves expansion of search nodes selected by an admissible heuristic with expansion on search nodes selected by nonadmissible ones. This approach was extended to treat the expansion from nonadmissible heuristics as a multi-armed bandit problem \cite{phillips2015efficient}. Recent work has sought to identify informative admissible heuristics for an improved version of NAMOA* \cite{geisser2022admissible}.

Other work has addressed these issues of efficiency and scale but only for two objectives \cite{ulloa2020simple,goldin2021approximate}. Also in the bi-objective context, others have focused on finding the extreme supported non-dominated plans on the Pareto frontier \cite{sedeno2015dijkstra} or approximate Pareto-optimal solutions \cite{zhang2022pex}. VBMO also produces non-dominated plans on the Pareto frontier but for multiple objectives. Others have sought to extend multi-objective path planning to the multi-agent context with subdimensional expansion \cite{ren2021subdimensional} and conflict-based search \cite{ren2021multi}.

\begin{algorithm}[!t]
    \DontPrintSemicolon
    \textbf{Input:} \textit{objectives $B$, map of environment, voting mechanism}\;
    Construct graph $G$ from map of environment\;
    $\mathcal{P} \gets \{ \}$\;
    \For{each objective $\beta_j \in B$}
    {
        Add $\beta_j$ to each edge's cost vector \;
        A* search to find plan $P_j$ in $G$ that minimizes $\beta_j$\;
        $\mathcal{P} \gets \mathcal{P} \cup P_j$\;
    }
    \For{each objective $\beta_j \in B$}
    {
        \For{each plan $P_i \in \mathcal{P}$}
        {
            $C_{ij} \gets$ cost of plan $P_i$ from $\beta_j$\;
        }
        Normalize plan scores $C_{ij}$ in [0,1]\;
    }
    \Switch{voting mechanism}{
        \Case{range voting}{
            \For{each plan $P_i \in \mathcal{P}$}
            {
                $Score_i \gets \sum_{j=1}^J C_{ij}$\;
            }
            $best \gets argmin_i \ Score_i$\;
        }
        \Case{Borda voting}{
            \For{each plan $P_i \in \mathcal{P}$}
            {
                $r_{ij} \gets$ rank of $P_i$\;
                $points_{ij} \gets (J+1)-r_{ij}$\;
            }
            \For{each plan $P_i \in \mathcal{P}$}
            {
                $Points_i \gets \sum_{j=1}^J points_{ij}$\;
            }
            $best \gets argmax_i \ Points_i$\;
        }
        \Case{combined approval voting}{
            \For{each plan $P_i \in \mathcal{P}$}
            {
                \uIf{$C_{ij} = 1$}{
                    $v_{ij} \gets$ -1\;
                }
                \uElseIf{$C_{ij} = 0$}{
                    $v_{ij} \gets$ 1\;
                }
                \uElse {
                    $v_{ij} \gets$ 0\;
                }
            }
            \For{each plan $P_i \in \mathcal{P}$}
            {
                $Value_i \gets \sum_{j=1}^J v_{ij}$\;
            }
            $best \gets argmax_i \ Value_i$\;
        }
    }
\Return{$P_{best}$}
\caption{VBMO planning algorithm}
\label{alg:planning}
\end{algorithm}

\begin{table}[t]
\centering
\caption{Each voting mechanism considers the scores $C_{ij}$ for six plans $P_i$ given six objectives $\beta_j$. Normalization ensures that each plan has minimum score with respect to its own objective. Range voting selects the plan with minimum total score, here $P_2$, Borda voting selects the plan with maximum total points, here $P_3$, and combined approval voting selects the plan with maximum total value, here $P_1$. Although all three methods start with the same set of scores, the difference between the voting mechanisms results in a different plan being selected as $P_{best}$.}
\label{tab:votingexample}
\begin{tabular}{|l|cccccc|r|}
\hline
Range             & $\beta_1$ & $\beta_2$ & $\beta_3$ & $\beta_4$ & $\beta_5$ & $\beta_6$ & \multicolumn{1}{c|}{Total Score}  \\ \hline
$P_1$ & 0   & 0.1  & 0.5   & 0.6  & 0    & 0.7  & 1.9 \\
$P_2$ & 0.1 & 0    & 0.2   & 0.2  & 1    & 0.3  & \textbf{1.8} \\
$P_3$ & 0.2 & 0.7  & 0     & 0.1  & 0.7  & 0.2  & 1.9 \\
$P_4$ & 1   & 0.8  & 1     & 0    & 0.3  & 1    & 4.1 \\
$P_5$ & 0.5 & 1    & 0.2   & 1    & 0    & 0.6  & 3.3   \\
$P_6$ & 0.5 & 1    & 0.2   & 0.1  & 1    & 0    & 2.8 \\ \hline
Borda             & $\beta_1$ & $\beta_2$ & $\beta_3$ & $\beta_4$ & $\beta_5$ & $\beta_6$ & \multicolumn{1}{c|}{Total Points} \\ \hline
$P_1$ & 6   & 5   & 4   & 3   & 6   & 2   & 26   \\
$P_2$ & 5   & 6   & 5   & 4   & 3   & 4   & 27   \\
$P_3$ & 4   & 4   & 6   & 5   & 4   & 5   & \textbf{28}   \\
$P_4$ & 2   & 3   & 3   & 6   & 5   & 1   & 20   \\
$P_5$ & 3   & 2   & 5   & 2   & 6   & 3   & 21   \\
$P_6$ & 3   & 2   & 5   & 5   & 3   & 6   & 24   \\ \hline
CAV & $\beta_1$ & $\beta_2$ & $\beta_3$ & $\beta_4$ & $\beta_5$ & $\beta_6$ & \multicolumn{1}{c|}{Total Value}  \\ \hline
$P_1$ & 1   & 0   & 0   & 0   & 1   & 0   & \textbf{2}    \\
$P_2$ & 0   & 1   & 0   & 0   & -1  & 0   & 0    \\
$P_3$ & 0   & 0   & 1   & 0   & 0   & 0   & 1    \\
$P_4$ & -1  & 0   & -1  & 1   & 0   & -1  & -2   \\
$P_5$ & 0   & -1  & 0   & -1  & 1   & 0   & -1   \\
$P_6$ & 0   & -1  & 0   & 0   & -1  & 1   & -1   \\ \hline
\end{tabular}
\end{table}

Some multi-objective approaches draw from social choice theory and voting systems. For example, in multi-attribute utility theory a function evaluates the available choices and selects the one with greatest utility \cite{keeney1993decisions}. Given a set of voters that cast a number of votes with respect to a set of \textit{candidates} (i.e., choices), a \textit{voting method} selects the winning candidate \cite{van2002overview}. The goal of a voting method is to weigh the voters' choices to select a winning candidate that fairly balances all the voters' opinions. Voting methods have incorporated characteristics such as ranking, approval, and scoring \cite{rossi2011short}. A voter has a \textit{preference} between two candidates when it selects one over the other according to some criterion \cite{coombs1977single}. A voter can \textit{rank} all the candidates according to its preferences \cite{flach2012machine}. For example, \textit{Borda} assigns values to the $c$ candidates based on the voters' rankings: a voter's first choice receives a value of $c-1$, its second choice a value of $c-2$, and so on. The candidate with the largest total value is the winner. In other voting methods voters approve or disapprove each candidate rather than create a ranking. For example, in \textit{combined approval voting} (CAV) voters assign a score of -1, 0, or +1 to indicate disapproval, apathy, or approval, respectively. Lastly, some voting methods allow voters to indicate their level of approval with a score. For example, in \textit{range voting} each voter gives a score within a given range to each candidate, and the candidate with highest sum of scores wins.

\begin{figure*}[t]
\centering
\subfigure[Distance]{
\includegraphics[width = 0.49\linewidth]{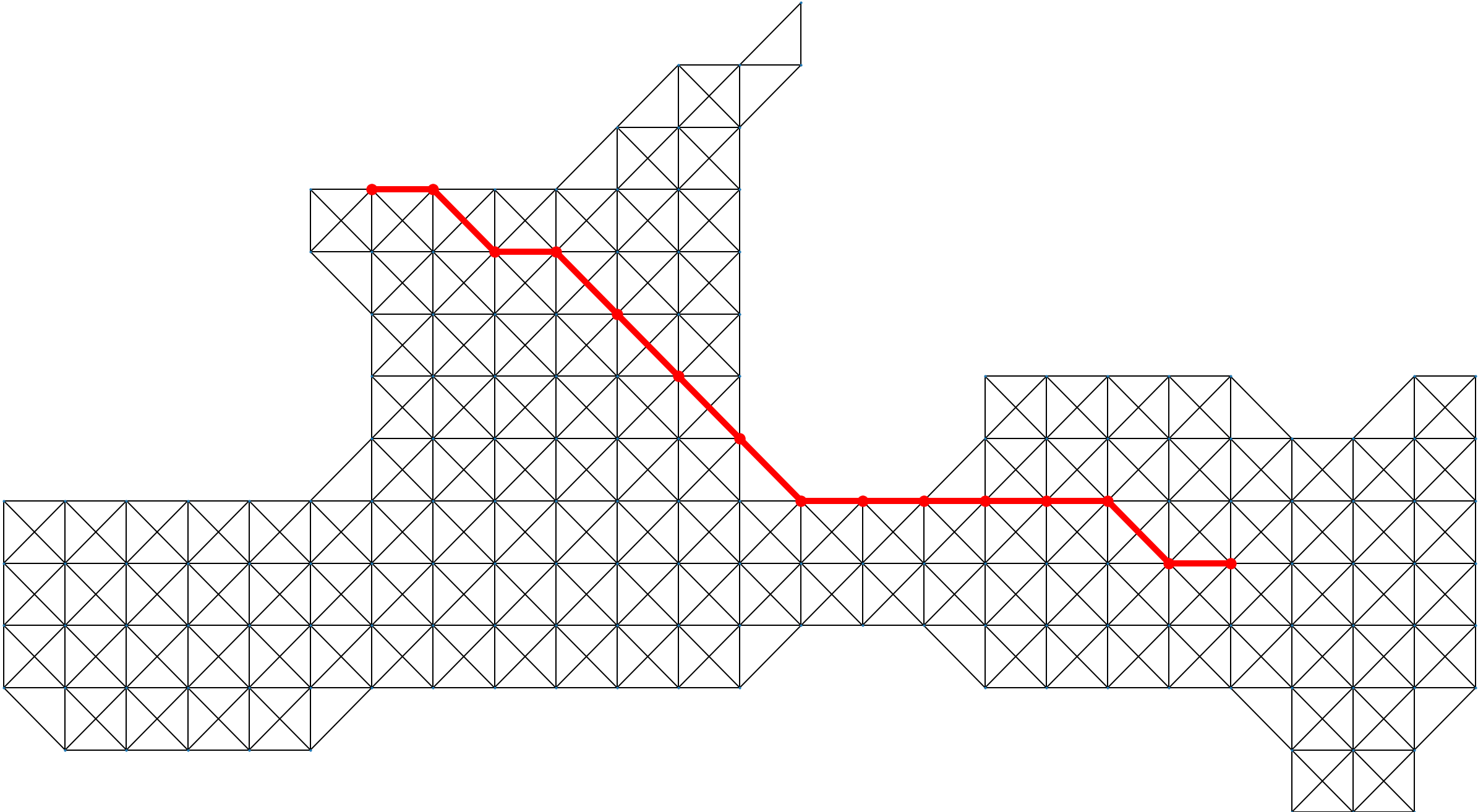}}
\subfigure[Uniform]{
\includegraphics[width = 0.49\linewidth]{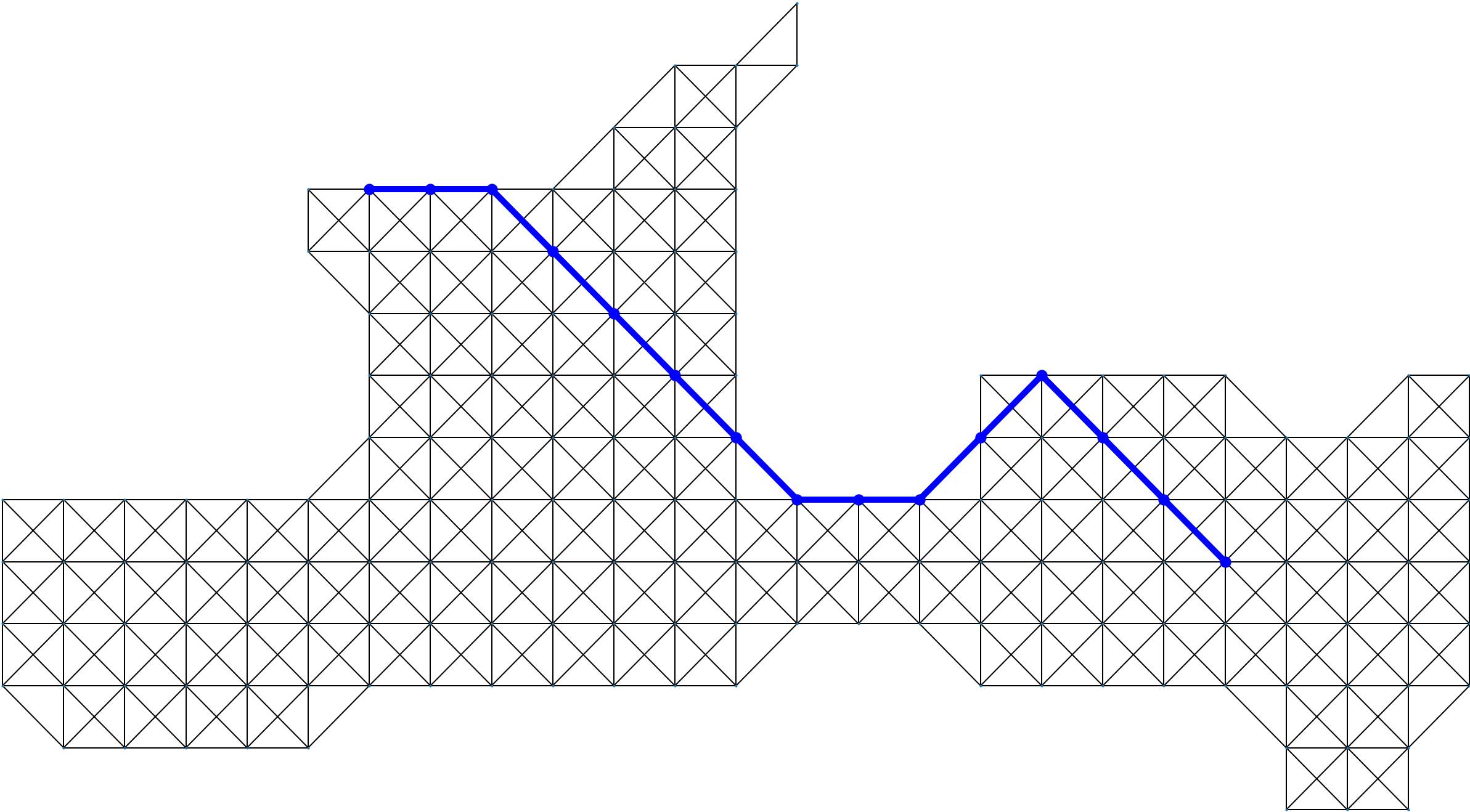}}
\subfigure[Safety]{
\includegraphics[width = 0.49\linewidth]{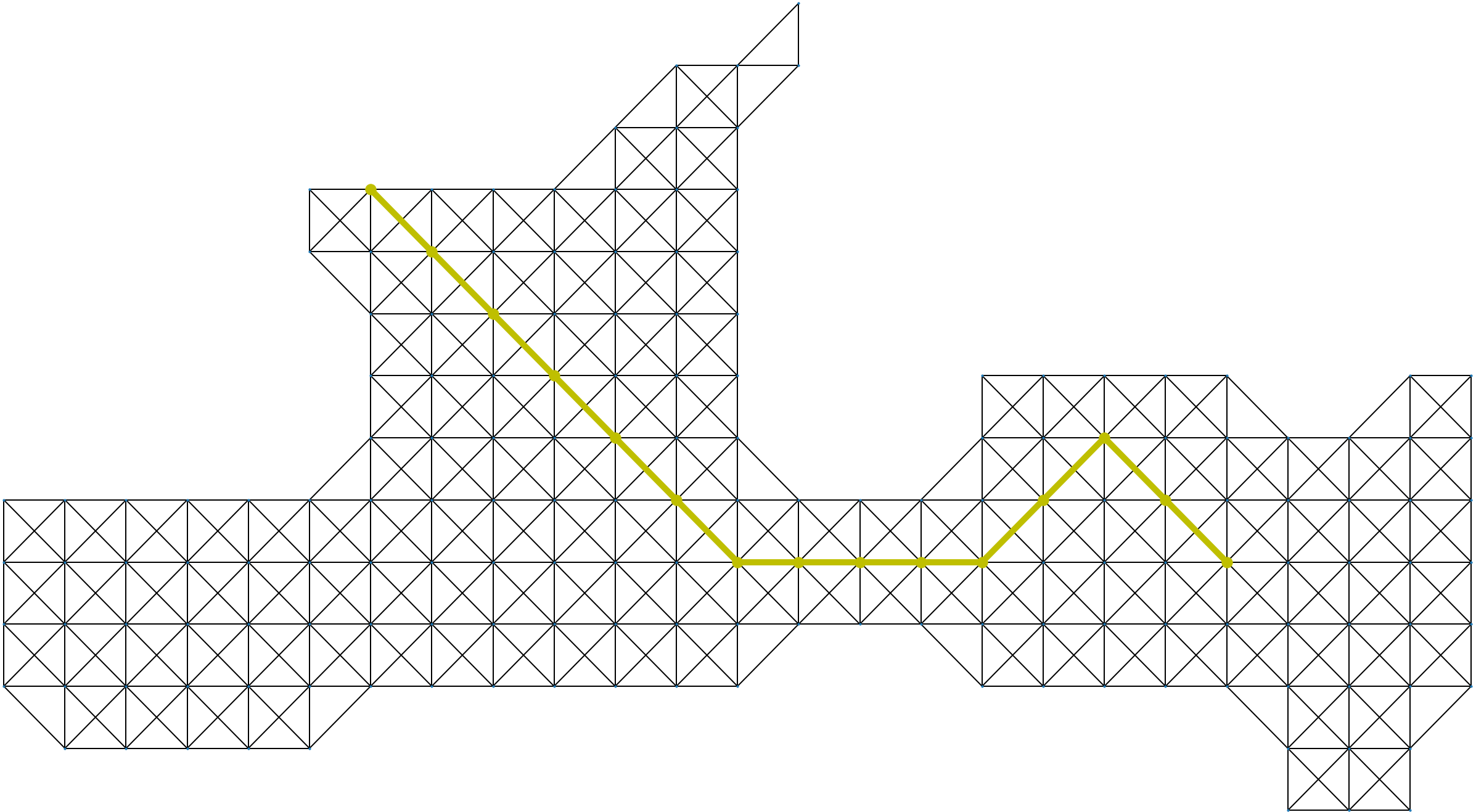}}
\subfigure[Random]{
\includegraphics[width = 0.49\linewidth]{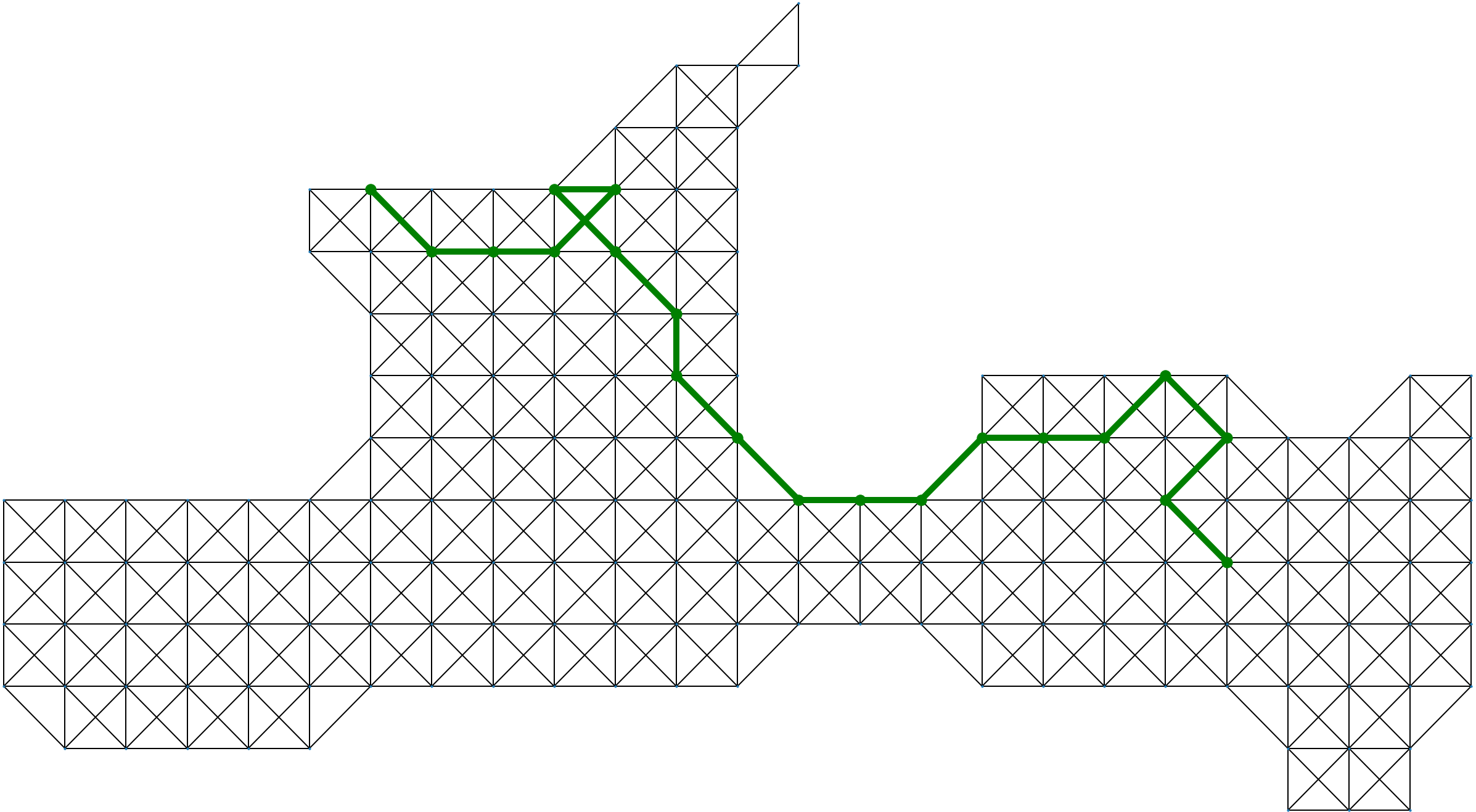}}
\caption{An example of four plans from VBMO in the environment of Figure \ref{fig:environment}. Each plan is optimal in its own objective. When VBMO used range voting, it selected the Safety-based plan because it has lowest total cost.}
\label{fig:example}
\end{figure*}

The approach closest to ours formulated multi-objective path planning as a reinforcement learning problem, and voted to select among the actions available at a state based on the expected reward under each objective \cite{tozer2017many}. It compared ranking, approval, and scoring voting methods. That approach, however, required hundreds of episodes of training and only considered an artificial $10 \times 20$ grid environment with four obstacles.

In previous work, we compared performance on a robot navigator's selected plan's objective with a fixed alternative objective to generate natural language explanations \cite{korpan2018toward}. That approach was then used on cognitively-based robot navigation system that learned three spatial models and compared plans that focused on each model to select and explain the plan that best exploited the robot's models \cite{epstein2019planning}. VBMO was first introduced as a way to generally provide contrastive natural language explanations of multi-objective path planning \cite{korpan2021contrastive}.

VBMO uses a topologically identical graph but with a set of edge labels that represents different objectives. It constructs an optimal plan for each objective, evaluates each plan with every objective, and then selects the plan with voting (i.e., range voting would select the plan with lowest total cost across all of them). This avoids the limitations of other approaches because it addresses each objective independently and then evaluates the resultant plans from the perspective of each planner. In \cite{korpan2021contrastive}, VBMO was only evaluated as a mechanism to produce natural language explanations. It was examined in a single environment with eight objectives, six of which relied on different aspects of a spatial model learned as the robot navigated through the environment. The result was a set of objectives that were highly correlated and context specific. Here, we generalize to many more environments with diverse configurations and complexities, with different types of objectives and voting mechanisms, and compare performance against a simple weighted sum of the objectives.

\section{Voting-based Multi-objective Path Planning}
VBMO constructs multiple plans, each of which optimizes a single objective, and then uses voting to select the plan that maximally satisfies the most objectives. It uses A* for its graph search algorithm. Pseudocode for VBMO appears in Algorithm \ref{alg:planning}. First, a graph is created to reflect the topology of the environment such that each vertex is unobstructed and edges are traversable. Figure \ref{fig:environment} shows an example of an environment along with its graph representation. Next, the edges are labeled with a cost vector that reflects the cost with respect to each objective. Figure \ref{fig:costs} shows an example of the edge costs based on three different objectives in Figure \ref{fig:environment}'s graph. Then VBMO constructs an optimal plan $P$ with respect to each objective in that graph. In this way, each plan is guaranteed to be optimal for at least one objective. Figure \ref{fig:example} provides an example of four plans generated by VBMO based on four objectives.

Once it assembles the set of plans $\mathcal{P}$, VBMO uses each objective to evaluate all of them. Because the underlying graph has the same topological structure (vertices and edges), every edge $e \in E$ in any plan is labeled by all the objectives. To evaluate planner $SO_i$'s plan $P_i=\langle v_1, v_2, \ldots, v_m \rangle$ from the perspective of objective $\beta_j$, VBMO sums $\beta_j$'s edge costs from the sequence of vertices in $P_i$. In this way, each objective $\beta_j$ calculates a \textit{score} $C_{ij}$ for each stored plan $P_i$. 

To avoid any biases that would be introduced by the magnitude of an objective's values, all scores from any $\beta_j$ are normalized in $[0,1]$. Because VBMO seeks to minimize its objectives, a score $C_{ij}$ near 0 indicates that plan $P_i$ closely conforms to objective $\beta_j$, while a score near 1 indicates that $P_i$ strongly violates $\beta_j$. Once every objective scores every plan, the plan $P_{best}$ is selected with voting.

VBMO has three voting mechanisms available to it. Range voting selects the plan with the lowest total score from all $J$ planners:
\begin{align} \label{eq:range}
P_{best}=\underset{P_i \in \mathcal{P}}{argmin} \ \sum_{j=1}^J C_{ij}
\end{align}
Borda voting first assigns a rank $r_{ij}$ to each plan's score $C_{ij}$ for each objective $\beta_j$ and then assigns points to each plan as $(J+1)-r_{ij}$. It selects the plan with maximum total points:
\begin{align} \label{eq:borda}
P_{best}=\underset{P_i \in \mathcal{P}}{argmax} \ \sum_{j=1}^J (J+1)-r_{ij}
\end{align}
Combined approval voting assigns values $v_{ij}$ to scores as
\[
    v_{ij}= 
\begin{cases}
    -1,& \text{if } C_{ij} = 1\\
    1,& \text{if } C_{ij} = 0\\
    0,              & \text{otherwise.}
\end{cases}
\]
It selects the plan with maximum total value:
\begin{align} \label{eq:cav}
P_{best}=\underset{P_i \in \mathcal{P}}{argmax} \ \sum_{j=1}^J v_{ij}
\end{align}
An example with all three voting methods appears in Table \ref{tab:votingexample}. It demonstrates that VBMO produces a plan that is optimal for each objective, and that voting methods can balance performance among all the objectives differently, which can change the plan that is selected. VBMO is guaranteed to construct at least one plan on the Pareto frontier (proof shown in \cite{korpan2021contrastive}). Furthermore, VBMO's voting mechanisms are guaranteed to always select a non-dominated plan.

\begin{theorem} Range voting always selects a non-dominated plan. \end{theorem}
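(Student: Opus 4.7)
The plan is to argue by contradiction, exploiting the monotonicity of the min--max normalization that precedes the sum in Equation~(\ref{eq:range}). Suppose range voting returns $P_{best} = P_m$, but some other candidate $P_k \in \mathcal{P}$ dominates $P_m$, so that $\beta_j(P_k) \leq \beta_j(P_m)$ for every $j \in \{1,\ldots,J\}$, with strict inequality at some index $j^{*}$.

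First I would push these inequalities through the normalization. For each fixed objective $j$ with non-degenerate range over $\mathcal{P}$, the transformation $\beta_j(P_i) \mapsto C_{ij}$ is a strictly increasing affine map (subtract the min over $\mathcal{P}$, divide by the positive range), so both weak and strict inequalities are preserved: $C_{kj} \leq C_{mj}$ for every $j$ and $C_{kj^{*}} < C_{mj^{*}}$. Summing over $j$ then yields $\sum_{j=1}^{J} C_{kj} < \sum_{j=1}^{J} C_{mj}$, which contradicts $P_m = \argmin_{P_i \in \mathcal{P}} \sum_{j} C_{ij}$ from Equation~(\ref{eq:range}).

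The main obstacle I anticipate is the degenerate normalization case, in which all plans in $\mathcal{P}$ share a common value of some $\beta_j$. The affine map is then undefined, but no plan can strictly improve on that coordinate, so this $j$ cannot serve as $j^{*}$; treating every $C_{ij}$ for such a degenerate objective as $0$ preserves the sum argument without introducing or removing strict inequalities. A secondary conceptual issue is whether ``non-dominated'' is meant locally (no plan in $\mathcal{P}$ dominates $P_{best}$) or globally (no feasible plan at all dominates it); the argument above establishes the local statement directly, and the previously cited guarantee that $\mathcal{P}$ contains at least one Pareto-optimal plan is the natural bridge to the stronger global claim.
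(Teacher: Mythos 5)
Your proof is correct and follows essentially the same route as the paper's: assume the selected plan is dominated by another candidate, sum the dominance inequalities over all objectives, and conclude that the dominating plan has a strictly smaller total score, contradicting the argmin in Equation~(\ref{eq:range}). You are in fact somewhat more careful than the paper's version---you explicitly check that min--max normalization preserves weak and strict inequalities, handle the degenerate constant-objective case, and correctly observe that the argument only rules out domination by plans within $\mathcal{P}$---but none of this changes the underlying argument.
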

\begin{proof}[Proof\nopunct] by contradiction. Assume range voting selected a plan $P_2$ that is dominated by plan $P_1$. By the definition of dominance this means that $\beta(P_1) \leq \beta(P_2)$ for every $\beta \in B$ and $\beta_j(P_1) < \beta_j(P_2)$ for at least one objective $\beta_j \in B$. Because range voting selects a plan with minimal score, we must minimize $P_2$'s score while also maintaining the dominance assumption. Suppose that $\beta(P_1) = \beta(P_2)$ for $\beta_1,...,\beta_{J-1}$ and is $\beta_J(P_1) < \beta_J(P_2)$ for only one objective $\beta_J$.

Range voting selects the plan with minimum total score $Score_i = \sum_{j=1}^J \beta_j(P_i)$. This is equivalent to $Score_i = (\sum_{j=1}^{J-1} \beta_j(P_i)) + \beta_J(P_i)$. When comparing $Score_1$ with $Score_2$, the first term $(\sum_{j=1}^{J-1} \beta_j(P_i))$ is equal in both, so the only difference is the second term $\beta_J(P_i)$. As shown earlier, $\beta_J(P_1) < \beta_J(P_2)$, so the total scores will be $Score_1 < Score_2$. 

For range voting to select $P_2$ over $P_1$, however, $Score_2$ must be less than $Score_1$ because it selects the plan with minimum score. Our assumption that range voting selected a plan $P_2$ that is dominated by plan $P_1$ must then be false and since dominance is transitive, no dominated plan could be selected, which proves the theorem must be true. \end{proof}

Similar proofs can trivially show that Borda voting and combined approval voting also always select a non-dominated plan.

In summary, VBMO is an efficient multi-objective path planning approach that always identifies and then selects a plan on the Pareto frontier, without reliance on finely-tuned weights. Theorem 1 proves that, given a set of dominated and non-dominated plans, VBMO's range voting mechanism will always select a non-dominated plan because non-dominated plans score lower with respect to at least one objective and therefore have a lower total score. Given $J$ objectives, VBMO has complexity $\mathcal{O}(J^2)$ because each plan is evaluated under each objective.

\section{Experimental Results}
\begin{table}[t]
\centering
\caption{Test environments}
\label{tab:environments}
\begin{tabular}{|p{0.81in}|l|p{0.5in}|p{0.5in}|p{0.5in}|}
\hline
Environment         & \# & Num. of Nodes & Num. of Edges & Average Degree \\ \hline
Dragon Age: Origins Grids & 156    & 168 to 137,375 & 552 to 530,551 & 6.5 to 7.9      \\ \hline
DIMACS NY Road Network & 1      & 365,050       & 264,346       & 2.8            \\ \hline
\end{tabular}
\end{table}

We compare VBMO with \textit{Weighted}, which uses A* search on a simple equally-weighted sum of the objectives. These were implemented in Python and evaluated on 156 Dragon Age: Origins (DAO) benchmark grid environments \cite{sturtevant2012benchmarks} and the NY road network from the 9th DIMACS Implementation Challenge: Shortest Path \footnote{http://www.diag.uniroma1.it/~challenge9/download.shtml}. Table \ref{tab:environments} summarizes details about these environments. We use Euclidean distance for A*'s heuristic in the DAO environments and Haversine distance for the DIMACS NY road network. Performance is measured by the average total score for the selected plan and the time to compute the plan. For each environment, performance is averaged over 50 randomly selected start and target vertices. All significant differences are at $p < 0.05$.

\begin{figure*}[t]
\centering
\subfigure[Scores]{
\includegraphics[width = 0.49\linewidth]{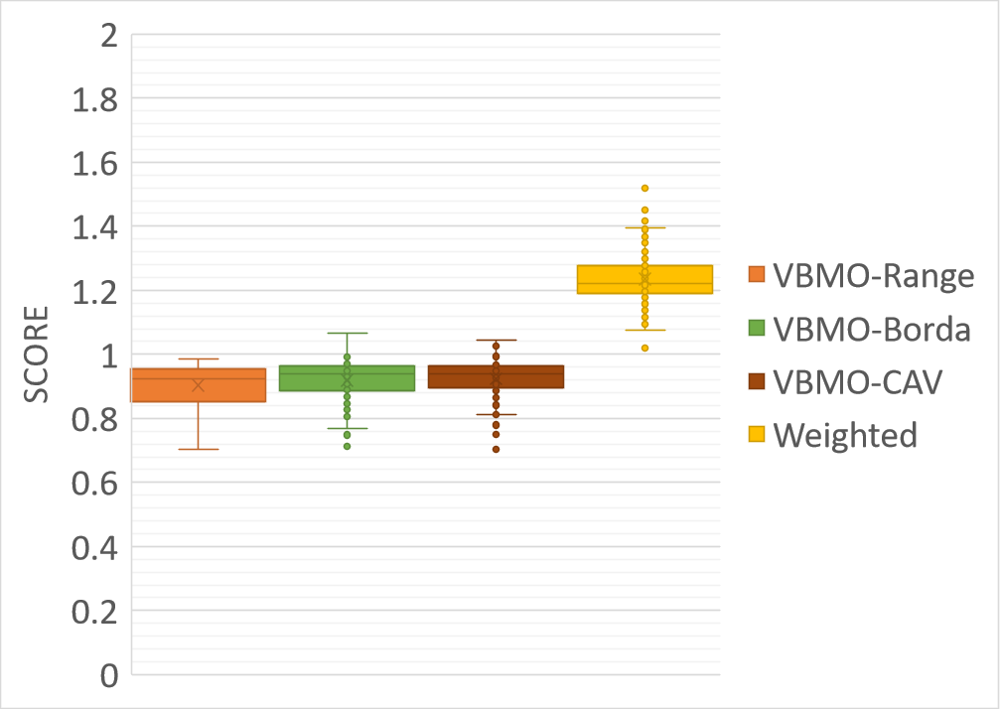}}
\subfigure[Times]{
\includegraphics[width = 0.49\linewidth]{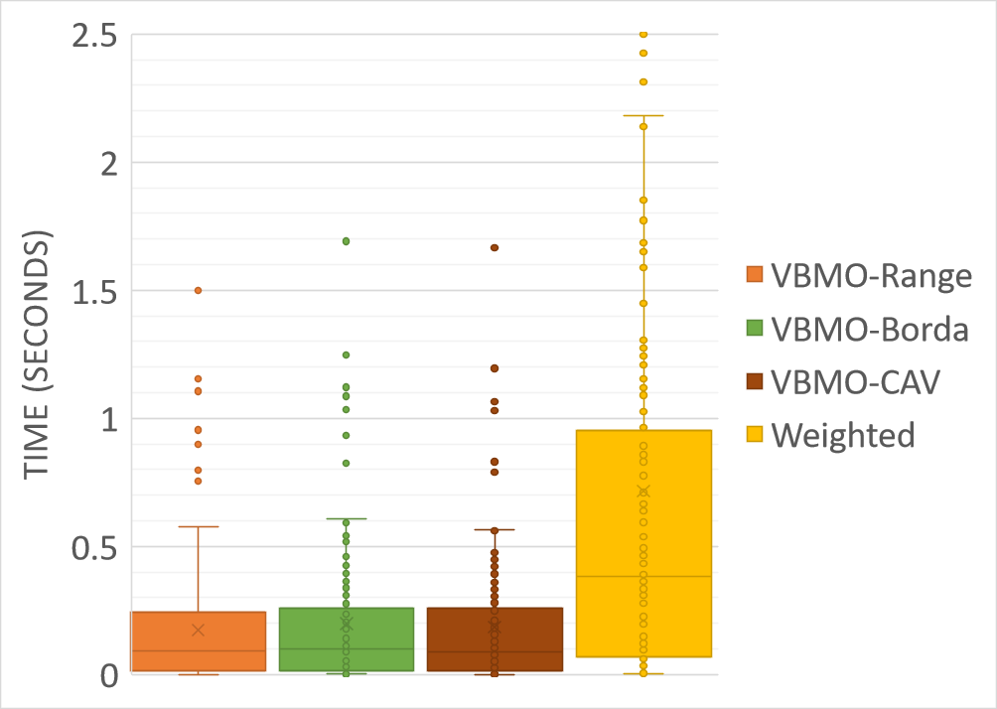}}
\caption{The distributions of the scores and times for the first experiment. (a) The average scores for VBMO-Range, VBMO-Borda, VBMO-CAV, and Weighted are 0.904, 0.920, 0.924, and 1.235, respectively. (b) The average times for VBMO-Range, VBMO-Borda, VBMO-CAV, and Weighted are 0.175s, 0.198s, 0.186s, and 0.717s, respectively. Both scores and times are significantly less than Weighted.}
\label{fig:exp1}
\end{figure*}

The first experiment compared performance on the DAO environments with three objectives: distance and random as described in Figure \ref{fig:costs}, and a uniform cost of 1 on all edges. These three objectives would minimize distance travelled, minimize number of steps taken, and minimize a random uniform value (e.g., traffic in an environment). With three objectives, the total score a plan can range from 0 to 2, with 0 indicating that the plan optimally adheres to all three objectives, and 2 indicating that the plan performed worst on the other two objectives. (A plan cannot get a score of 1 from its own objective because it optimally minimizes its own objective.) VBMO's score and time was significantly better than Weighted for all the environments and with all three voting mechanisms. Figure \ref{fig:exp1} compares the distributions of the scores and times. Table \ref{tab:exp1} shows how frequently each plan was selected by VBMO.

\begin{table}[t]
\centering
\caption{In the first experiment, averaged over the 156 environments among the 50 runs for each environment, voting selected the distance-based plan more often than the other two plans.}
\label{tab:exp1}
\begin{tabular}{|l|l|l|l|}
\hline
      & Distance & Uniform & Random \\ \hline
Range & 76.9\%   & 22.8\%  & 0.3\%  \\ \hline
Borda & 88.5\%   & 11.1\%  & 0.4\%  \\ \hline
CAV   & 88.6\%   & 11.1\%  & 0.3\%  \\ \hline
\end{tabular}
\end{table}

One challenge with using a weighted sum is to weight the objectives so that search is not biased toward any one objective because of the magnitude of its values. In the first experiment the the random objective could have a value much larger than the other two objectives, so the simple weighted sum would be biased toward plans that most satisfy that objective. VBMO does not have this problem because it normalizes the scores before calculating the total scores.

In the second experiment, the objectives were modified so that their values would be approximately equal. Distance remained the same, uniform was changed to 1.5 cost on all edges, and random would select a cost of 1 or 2 for each edge. In this case all three objectives are relatively similar in magnitude. The result is that VBMO with range voting performs significantly worse in terms of average score in 151 of the DAO environments, and no significant difference in 5 environments. The average score for VBMO with range voting across the 156 environments is 0.855 compared to 0.536 for Weighted. In this situation, it is possible VBMO performed worse because it is designed to find the plans at the extremes of the Pareto frontier, and Weighted can find a plan somewhere in the middle, which could end up having a smaller total score because it does equally well on all three objectives without being optimal in any single one. Despite having worse scores, VBMO was still significantly faster than Weighted with an average time of 0.396s compared to 0.831s.

\begin{table*}[t]
\centering
\caption{Comparison of performance with different objectives. Asterisk indicates significant difference between VBMO and Weighted.}
\label{tab:dimacs}
\begin{tabular}{|l|rr|rr|}
\hline
                                                    & \multicolumn{2}{c|}{Score}       & \multicolumn{2}{c|}{Time}        \\
Objectives Used & \multicolumn{1}{c|}{VBMO} & \multicolumn{1}{c|}{Weighted} & \multicolumn{1}{c|}{VBMO} & \multicolumn{1}{c|}{Weighted} \\ \hline
Distance, Time, Uniform = 10                        & \multicolumn{1}{r|}{1.12} & 1.05 & \multicolumn{1}{r|}{3.24*} & 3.68 \\ \hline
Distance, Time, Uniform = 2000                      & \multicolumn{1}{r|}{1.05} & 0.78* & \multicolumn{1}{r|}{3.07*} & 3.56 \\ \hline
Distance, Time, Uniform = 20, Random =   {[}1,20{]} & \multicolumn{1}{r|}{1.27*} & 1.43 & \multicolumn{1}{r|}{4.64} & 4.33 \\ \hline
Distance, Uniform = 20, Random = {[}1, 20{]}        & \multicolumn{1}{r|}{0.93*} & 1.81 & \multicolumn{1}{r|}{3.02*} & 3.52 \\ \hline
Distance, Uniform = 1000, Random = {[}1,   37000{]} & \multicolumn{1}{r|}{0.92*} & 1.04 & \multicolumn{1}{r|}{2.81*} & 3.34 \\ \hline
\end{tabular}
\end{table*}

The first two experiments used a random objective, which introduces uncertainty. To examine if uncertainty had an effect on the performance, the third experiment uses distance, uniform cost of 1.5, and the safety objective described in Figure \ref{fig:costs}. The results showed that of the 156 DAO environments, VBMO with range voting scored significantly better in 118, significantly worse in 2, and no different in 36. In the 118 environments with better scores VBMO scored 0.469 compared to 0.803 for Weighted on average. For the two worse environments, VBMO scored 0.328 compared with 0.18 for Weighted. Finally, for the 36 where there was no statistically significant difference, VBMO scored lower on average (0.463) than Weighted (0.539). An examination of the environment's structures, graph density, and average distance to the target did not reveal a readily apparent reason for the environments where VBMO performed better or worse. Current evaluation seeks to identify the environment's characteristics that affect VBMO's performance. VBMO was significantly faster than Weighted for all these environments, however, with an average time of 0.121s compared to 0.732s.

The fourth experiment examined performance in a much larger real-world environment, the DIMACS NY road network. This data comes with two objectives already in the data set: distance and time. Table \ref{tab:dimacs} shows the performance of VBMO with range voting on several configurations of objectives in this environment. The first two instances used distance, time, and a uniform cost. VBMO's score was no different when a small uniform cost was used, however, it was significantly worse with a larger uniform cost that was close to the average value of the distance and time objectives, similar to the result of the second experiment. The third instance added a random objective, which resulted in VBMO having a significantly better score. Given that distance and time are 96\% correlated in this data set, the last two instances only used distance along with uniform and random costs, both at different scales. In both of these cases, VBMO had a significantly better score. In terms of time, VBMO was significantly faster with three objectives, and no different with four objectives.

\section{Discussion}
This paper describes a flexible, scalable approach for multi-objective path planning, VBMO, that uses voting to select among single-objective planners. VBMO evaluates each plan with respect to each of the objectives. The plan that VBMO selects is optimal in its own objective and is guaranteed to be non-dominated.

Both evolutionary methods and VBMO consider a population of solutions and select among them with a kind of fitness function. VBMO, however, does not require multiple iterations to refine its plan. Instead, it starts with at least one plan already on the Pareto frontier and selects a plan that is generally expected to perform well with respect to all the objectives. Future work could consider a hybrid approach, that starts with VBMO's plans and then uses an evolutionary method to merge and transform those plans. In this way the best segments of different plans could be combined or problematic portions eliminated.

Here, Borda voting and combined approval voting were added to VBMO. Future work could consider additional voting methods for VBMO, such as a Condorcet method, to select a plan because those methods may have properties that more fairly and efficiently balance competing objectives to select among the plans. Although VBMO does not need hand-tuned weights to balance multiple objectives, they could be easily incorporated into any of the voting mechanisms to change an objective's influence on the sum. 

VBMO uses A* for its graph search algorithm but another optimal graph search algorithm could easily be substituted for it. Future work could examine how other search algorithms would affect VBMO's performance. In several of the experiments conducted here VBMO did not achieve a significantly better score. A potential reason for this could be that Euclidean distance was used as the heuristic for A* across all objectives in the DAO environments. Euclidean distance may not, however, be admissible for some objectives, such as random and uniform, so it may overestimate the cost to the goal. Current work considers alternative heuristics to ensure A* is optimal.

VBMO can encounter a task where its set of objectives $B$ generate plans that are equally poor on all the other objectives. In that case, total scores for all plans are equal and VBMO selects a plan at random, one that should perform well only on its own objective and poorly on the others. That reduces the solution to a planner with a single, randomly chosen objective. Other multi-objective planning methods avoid this difficulty by compromise among all the objectives rather than focus on strong performance from one. To address this issue, VBMO could incorporate additional planners that introduce weighted sums of different objectives so that the planner is forced to find a plan that compromises between them but would do less well on any single objective. Unless these additions were simple, it would become a problem of finding the best set of weights for the objectives (i.e., searching among the infinitely many points on the Pareto frontier).

As shown in our previous work, VBMO easily generates contrastive explanations in natural language. These explanations flexibly compare plans with respect to the objectives under consideration and express the controller's confidence in its selected plan. Current work considers how to use a language model to shorten, simplify, and produce more human-like explanations.

Although VBMO is applied here to path planning, it is more generally applicable to any multi-objective problem where a solution based on a single objective can be evaluated from the perspective of the other objectives. VBMO also only considers a single agent, future work could extend VBMO to address multi-agent multi-objective path planning.

VBMO is an efficient multi-objective path planning approach that generates plans on the Pareto frontier. The plan it selects with voting is guaranteed to be optimal with respect to at least one objective and likely does well on the others. The results with voting-based multi-objective path planning presented here demonstrate that this planning algorithm is fast, flexible, and scalable.

\appendix



\section*{Acknowledgments}
The author thanks Dr. Susan L. Epstein for her mentorship on the development of VBMO.

\bibliographystyle{ijcai23}
\bibliography{ijcai23}

\end{document}